\documentclass[letterpaper, 10 pt, conference]{ieeeconf}

\IEEEoverridecommandlockouts
\overrideIEEEmargins

\usepackage{url} 
\usepackage{algorithm}
\usepackage{algorithmic}
\usepackage{amsmath,amssymb,amsfonts}
\usepackage[font=small,labelfont=bf]{caption}
\usepackage{epsfig,graphics}
\usepackage{xcolor}
\usepackage{cite}

\usepackage{pgf}
\usepackage{tikz}
\usetikzlibrary{arrows,automata}

\graphicspath{{./fig/}}

\usepackage{theorem}
\newtheorem{lemma}{Lemma}
\newtheorem{remark}{Remark}

\newtheorem{assumption}{Assumption}
\newtheorem{corollary}{Corollary}
\newtheorem{definition}{Definition}

\newtheorem{problem}{Problem}
\newtheorem{example}{Example}

\newcommand{\T}{\mathcal{T}} 
\newcommand{\A}{\mathcal{A}} 
\newcommand{\B}{\mathcal{B}} 

\renewcommand{\P}{\mathcal{P}} 
\newcommand{\init}{\mathit{init}}
\newcommand{\currs}{\mathfrak{s}}
\newcommand{\currq}{\mathfrak{q}}

\newcommand{\AP}{\Pi} 
\newcommand{\APs}{\mathbf{\Pi}}
\newcommand{\Lang}{\mathcal{L}} 
\newcommand{\Set}{\mathsf{S}} 
\newcommand{\Spec}{\mathbf{\Phi}}

\newcommand{\Nat}{\mathbb{N}} 

\newcommand{\Next}{\mathsf{X}}
\newcommand{\Until}{\mathsf{U}}
\newcommand{\Always}{\mathsf{G}}
\newcommand{\Event}{\mathsf{F}}
\newcommand{\true}{\mathit{true}}

\renewcommand{\epsilon}{\varepsilon}

\newcommand{\prop}{\pi}

\newcommand{\ie}{{i.e., }}
\newcommand{\eg}{{e.g., }}

\newcommand{\h}{h}
\renewcommand{\H}{H}

\newcommand{\Alpha}{\mathbf{\Sigma}}
\renewcommand{\mod}{\mathrm{\, mod \, }}
\newcommand{\suc}{\mathit{succ}}

\newcommand{\new}[1]{ #1}

\begin{document}

\title{A Receding Horizon Approach to Multi-Agent Planning from Local LTL Specifications}
\author{Jana T\r{u}mov\'a and Dimos V. Dimarogonas \thanks{The authors are with the ACCESS Linnaeus Center, School of Electrical
Engineering, KTH Royal Institute of Technology, SE-100 44, Stockholm,
Sweden and with the KTH Centre for
Autonomous Systems. \texttt{tumova, dimos@kth.se}. This work was supported by the EU STREP RECONFIG.}}

\maketitle
\thispagestyle{empty}
\pagestyle{empty}

\begin{abstract}
We study the problem of control synthesis for multi-agent systems, to achieve complex, high-level, long-term goals that are assigned to each agent individually. As the agents might not be capable of satisfying their respective goals by themselves, requests for other agents' collaborations are a part of the task descriptions. Particularly, we consider that the task specification takes a form of a linear temporal logic formula, which may contain requirements and constraints on the other agent's behavior. A traditional automata-based approach to multi-agent strategy synthesis from such specifications builds on centralized planning for the whole team and thus suffers from extreme computational demands. In this work, we aim at reducing the computational complexity by decomposing the strategy synthesis problem into short horizon planning problems that are solved iteratively, upon the run of the agents. We discuss the correctness of the solution and find assumptions, under which the proposed iterative algorithm leads to provable eventual satisfaction of the desired specifications.
\end{abstract}

\section{Introduction}

In recent years, a considerable amount of attention has been devoted to automatic synthesis of robot controllers to execute complex, high-level mission, such as ``periodically survey regions $A$, $B$, $C$, in this order, while avoiding region~$D$'', specified as temporal logic formulas. 
Many of the suggested solutions to this problem and its variants rely on a three-step hierarchical procedure\cite{hadas09TL, marius-tac2008, nok-hscc2010, kavraki-ram}: First, the dynamics of the robotic system is abstracted into a finite, discrete transition system using e.g., sampling or cell decomposition methods based on triangulations or rectangular partitions. Second, leveraging ideas from formal verification methods, a discrete plan that meets the mission is synthesized. Third, the discrete plan is translated into a controller for the original system.

In this work, we focus on a multi-agent version of the above problem. Namely, we consider a team of robots, that are assigned a temporal mission each. 
As the robots may not be able to accomplish the mission without the help of the others, the individual mission specifications may contain requirements or constraints on the other team members' behavior. For instance, consider a warehouse solution with two mobile robots that periodically load and unload goods in certain locations of the warehouse. A part of the first robot's mission is to load an object in region $A$, however it is not able to load it by itself. Therefore at that point, the part of the mission is also a task for the second robot, to help loading.

The goal of this paper is to  synthesize a plan for each agent, such that each agent's mission specification is met. We follow the hierarchical approach to robot controller synthesis as outlined above and we narrow our attention to the second step of the approach, i.e., to generating discrete plans. The application of the algorithm that we propose is, however, not restricted to discrete systems: For the first step of the hierarchical approach, methods for discrete modeling of robotic systems can be used (\eg\cite{hadas09TL, marius-tac2008, nok-hscc2010, lavalle} and the references therein); for the third step, low-level controllers exist that can drive a robot from any position within a region to a goal region (\eg\cite{Belta-TAC06}).
As a mission specification language, we use Linear Temporal Logic (LTL), for its resemblance to natural language \cite{hadas-icra2012}, and expressive~power. 

Multi-agent planning from temporal logic specification has been explored in several recent works. 
\new{Planning from computational tree logic was considered in~\cite{quo-icra2004}, whereas in~\cite{loizou-cdc2005,marius-cdc2011}, the authors focus on planning behavior of a team of robots from a single, global LTL specification. 
Fragments of LTL have been considered for vehicle routing problems for unmanned aerial vehicles 
in~\cite{sertac-ijnc2010}, and for search and rescue missions in~\cite{lygeros-ecc2013}. A decentralized control of a robotic team from local LTL specification with communication constraints is proposed in~\cite{dimos-cdc12}. However, the specifications there are truly local and the agents do not impose any requirements on the other agents' behavior. Thus, the focus of the paper is significantly different to ours.
 }
As opposed to our approach, in~\cite{yushan-tr2012,alphan-ijrr2013}, a top-down approach to LTL planning is considered; the team is given a global specification and an effort is made to decompose the formula into independent local specifications that can be treated separately for each agent.

In~\cite{meng-cdc2013}, bottom-up planning from LTL specifications is considered, and a partially decentralized solution is proposed that takes into account only clusters of dependent agents instead of the whole group. A huge challenge of the previous approach is its extreme computational complexity. 
To cope with this issue, in this paper, we propose a receding horizon approach to multi-agent planning. The idea is to translate infinite horizon planning into an infinite sequence of finite horizon planning problems similarly as in \cite{nok-hscc2010}, where the authors leverage the same idea to cope with uncertain elements in an environment in single-robot motion planning. 
\new{To guarantee the satisfaction of the formula, we use an attraction-type function that guides the individual agents towards a progress within a finite planning horizon; similar ideas were used in~\cite{dennis-rh,maja} for a single-agent LTL planning to achieve a locally optimal behavior.}
The contribution of this paper can be summarized as the introduction of an efficient, limited horizon planning technique in the context of bottom-up control strategy synthesis for multi-agent systems from local LTL specifications. To our best knowledge, such an approach has not been taken to address the distributed multi-agent planning problem and its extreme computational demands before.

\new{The rest of the paper is structured as follows. In Sec.~\ref{sec:prelims}, we fix necessary preliminaries. Sec.~\ref{sec:pf} introduces the problem statement and summarizes our approach. In Sec.~\ref{sec:solution}, the details of the solutions are provided. We present an illustrative example and simulation results in Sec.~\ref{sec:simulations}, and we conclude and outline several directions for future research in Sec.~\ref{sec:summary}. 
}

\section{Preliminaries}
\label{sec:prelims}

Let 
$2^\Set$,
and $\Set^\omega$
denote 
the set of all subsets of a set $\Set$, 
and the set of all infinite sequences of elements of $\Set$, respectively.

\subsection{System Model and Specification}
\begin{definition}[Transition System]
A \emph{labeled deterministic transition system (TS)} is a tuple $\T=(S,s_{init},R,\AP,L)$, where
\begin{itemize}
\item 
$S$ is a finite set of states;
\item 
$s_{init} \in S$ is the initial state;
\item 
$R \subseteq S \times S$ is a deterministic transition relation;
\item 
$\AP$ is a set of services;
\item 
$L: S \rightarrow 2^\AP$ is a labeling function.
\end{itemize}
\label{def:TS}
\end{definition}

The labeling function assigns to each state $s$ a subset of \emph{services} that are available in that state. In other words, there is an option to provide or not to provide a service $\pi \in L(s)$ in the state $s$. In contrast, $\pi $ cannot be provided in $s$, where $\pi \not \in L(s)$.
The transition system evolves as follows: from a current state, either a subset of available services is provided, or the system changes its state by executing a transition while providing a so-called \emph{silent service} $\varepsilon$. Note, that we distinguish between an empty set of services $\emptyset$ and a silent service $\varepsilon$.
Formally, a \emph{trace} of $\T$ is an infinite 
alternating sequence of states and subsets of services $\tau = s_1\varpi_1s_2\varpi_2\ldots$, such that
$s_1 = s_{init}$, and for all $i \geq 1$ either
(i) $s_i = s_{i+1}$, and $\varpi_i \subseteq L(s_i)$,  or
(ii) $(s_i,s_{i+1}) \in R$, and $\varpi_i = \varepsilon$.

A trace $\tau=s_1\varpi_1s_2\varpi_2\ldots $ is associated with a sequence  $w_\varepsilon(\tau) = \varpi_1\varpi_2\ldots  \in (2^\Pi \cup \{\varepsilon\})^\omega$, and the \emph{word produced by} $\tau$ defined as the subsequence of the non-silent elements of $w_\varepsilon(\tau)$. Formally, a word produced by $\tau=s_1\varpi_1s_2\varpi_2\ldots$ is  $w(\tau) = \varpi_{i_1}\varpi_{i_2}\ldots \in (2^{\Pi })^\omega$, such that $\varpi_1, \ldots, \varpi_{i_1-1} = \varepsilon$, $\varpi_{i_j+1}, \ldots, \varpi_{i_{j+1}-1} = \varepsilon$  and $\varpi_{i_j} \neq \varepsilon$, for all $j \geq 1$. \new{The sequence of indexes $\mathbb{T}(\tau) = \mathbb{T}(w_\epsilon(\tau)) = i_1i_2\ldots$ is the sequence of time instances, when non-silent services are provided, called a \emph{service time sequence}. Note that the word $w(\tau)$ and the service time sequence $\mathbb{T}(\tau)$ might be finite as well as infinite.} However, 
as in this work we are interested in infinite, recurrent behaviors, we will consider as \emph{valid} traces only those producing infinite words.

{\begin{definition}
An LTL formula $\phi$ over the set of services $\AP$ is defined
  inductively as follows:
   \begin{enumerate}
  \setlength{\itemsep}{1pt}
  \setlength{\parskip}{0pt}
  \setlength{\parsep}{0pt}
  \item every service $\prop \in \AP$ is a formula, and
  \item if $\phi_1$ and $\phi_2$ are formulas, then $\phi_1 \lor
    \phi_2$, $\lnot \phi_1$, $\Next\, \phi_1$, $\phi_1\,\Until\,\phi_2$, $\Event \, \phi_1$, and $\Always \, \phi_1$
    are each formulas,
  \end{enumerate}
 where $\neg$ (negation) and $\vee$
  (disjunction) are standard Boolean connectives, and $\Next$ (next), $\Until$ (until), $\Event$ (eventually), and  $\Always$ (always) are temporal operators.
  \end{definition}}

The semantics of LTL is defined over infinite words over~$2^\AP$, such as those produced by traces of the TS from Def.~\ref{def:TS} (see, e.g.,~\cite{principles} for details). Intuitively, $\pi$ is satisfied on a word $w = w(1)w(2)w(3)\ldots$ if it holds at $w(1)$. Formula $\Next \, \phi$ holds true if $\phi$ is satisfied on the word suffix $w(2)w(3)\ldots$
, whereas $\phi_1 \, \Until\, \phi_2$ states that $\phi_1$ has to be true until $\phi_2$ becomes true. Finally, $\Event \, \phi$ and $\Always \, \phi$ are true if $\phi$ holds on $w$ eventually, and always, respectively.

The language of all words that are accepted by an LTL formula $\phi$ is denoted by $\Lang(\phi)$. A trace $\tau$ of
$\T$ satisfies LTL formula $\phi$, denoted by $\tau \models \phi$ iff the word $w(\tau)$ satisfies $\phi$, denoted $w(\tau) \models \phi$. 

\begin{remark}
{Traditionally, LTL is defined over the set of atomic propositions (APs) instead of services (see, e.g.~\cite{principles}). In transition systems, the APs represent inherent properties of system states. The labeling function $L$ then partitions APs into those that are true and false in each state.
 The  LTL formulas are interpreted over runs, i.e., sequences of states of transition systems. Run $s_1s_2\ldots$ satisfies $\phi$ if and only if the $L(s_1)L(s_2)\ldots \models \phi$.
}

In this work, we consider an alternative definition of LTL semantics to describe the desired tasks. Particlularly, we perceive atomic propositions as offered services rather than undetachable inherent properties of the system states. For instance, given that a state is determined by the physical location of an agent, we consider atomic propositions of form ``in this location, data can be gathered'', or ``there is a recharger in this location" rather than ``this location is dangerous''. In other words, the agent is given the option to decide whether an atomic proposition $\prop \in L(s)$ is in state $s$ satisfied or not. In contrast, $\prop \in \AP$ is never satisfied in state $s$, such that $\prop \not \in L(s)$. The LTL specifications are then interpreted over sequences of executed services along traces instead of the words produced by the traces.
\end{remark}

\subsection{Strategy Synthesis}
\label{sec:prelims:synthesis}
Given a transition system $\T$ with the set of atomic propositions $\AP$ and an automaton $\A$ over $2^\AP$, we say that a trace $\tau$ of $\T$ \emph{satisfies} $\A$, denoted by $\tau\models \A$ if and only if the word produced by $\tau$ belongs to the language of $\A$, \ie if $w(\tau) \in L(\A)$.

\begin{definition}[B\"uchi Automaton]
A B\"uchi automaton (BA) is a tuple $\B =  (Q,q_{init},\Sigma,\delta,F)$, where
\begin{itemize}
\item 
$Q$ is a finite set of states; 
\item 
$q_{init}\in Q$ is the initial state; 
\item 
$\Sigma$ is an input alphabet; 
\item 
$\delta \subseteq Q \times \Sigma \times Q$ is a non-deterministic transition relation; 
\item 
$F$ is the acceptance condition.
\end{itemize}
\end{definition}

The semantics of B\"uchi automata are defined over infinite words over $\Sigma$, such as those generated by a transition system from Def.~\ref{def:TS} if $\Sigma = 2^\Pi$. A \emph{run} of the BA $\B$ \emph{over} an input word $w=w(1)w(2)\ldots$  is a sequence
$\rho=q_1q_2\ldots$, such that $q_1  = q_{init}$, and
$(q_i,w(i),q_{i+1}) \in \delta$, for all $i\geq 1$. Word $w$ is accepted if there exists an accepting run $\rho$ over $w$ that intersects $F$ infinitely many times. 
$\Lang(\B)$ is the \emph{language} of all accepted words.
Any LTL formula $\phi$ over $\Pi$ can be translated into a BA $\B$, such that $\Lang(\B) = \Lang(\phi)$~\cite{principles} using an off-the-shelf software tool, such as~\cite{ltl2ba}.

Given a BA $\B$, we define the set of states $\hat \delta^k(q)$ that are reachable from a state $q \in Q$ in exactly $k$ steps inductively as
(i) $\hat \delta^0(q) = \{q\}$, and
(ii) $\hat \delta^{k+1}(q) = \bigcup_{q' \in \hat \delta^{k}(q)} \{q'' \mid \exists \, \sigma \in \Sigma. \, (q',\sigma,q'') \in \delta\}$, for all $k \geq 0$.

\begin{definition}[Product Automaton]
\label{def:product}
A product of a transition system $\T=(S,s_{init},R,\AP,L)$ and a B\"uchi automaton $\B =  (Q,q_{init},2^\Pi,\delta,F)$ is an automaton $\P =\T \otimes \B=  (Q_\P,q_{init,\P}, \Sigma_\P,  \delta_\P,F_\P)$, where
\begin{itemize}
\item 
$Q_\P = S \times Q$;
\item 
$q_{init,\P} = (s_{init},q_{init})$;
\item 
$\Sigma_\P = 2^{\Pi} \cup \{\varepsilon\}$;
\item 
$((s,q),\sigma, (s',q')) \in \delta_\P$ iff either
\begin{itemize}
\item[$\circ$] 
$(s,s') \in R, \sigma = \varepsilon$, $q=q'$, or
\item[$\circ$] 
$s=s', \sigma \subseteq L(s)$, and $(q,\sigma,q') \in \delta$;
\end{itemize}
\item 
$F_\P = S \times F$.

\end{itemize}
\label{def:product}
\end{definition}

A run of the product automaton over a word $w= \sigma_1\sigma_2\ldots \in 2^{\Pi }$ is a sequence of states $\rho = p_1p_2\ldots$, where $p_1=q_{\init,\P}$, with the property that and there exists a word $w_\varepsilon = w_\varepsilon(1)w_\varepsilon(2) \ldots =  \varepsilon \ldots \varepsilon \, \sigma_1 \varepsilon \ldots \varepsilon \, \sigma_2 \varepsilon \ldots \in \Sigma_\P^\omega$, such that $(p_i,w_\varepsilon(i),p_{i+1}) \in \delta_\P$, for all $i \geq 1$. Such a run is accepting if it intersects $F_\P$ infinitely many times. 

An accepting run $\rho = (s_1,q_1)(s_2,q_2)\ldots$ over a word $w = \sigma_1\sigma_2\ldots \in (2^\Pi)^\omega$ of the product automaton projects onto a valid trace $\tau=s_1 \varpi_1 s_2\varpi_2\ldots$ of $\T$, which produces the word $w$. At the same time, $w \in \Lang(\B)$. Dually, there exists an accepting run of $\P$ over each word $w \in \Lang(\B)$ that is produced by a valid trace $\tau$ of $\T$.

\medskip

An automaton $(Q,q_\init,\Sigma,\delta,F)$, can be viewed as a graph $(V,E)$ with the set of vertices $V= Q$ and the set of edges $E$ given by the transition function $\delta$ in the expected way. Thus, the standard notation from graph theory can be applied:
A \emph{path} in an automaton is a finite sequence of states and transition labels $q_i\xrightarrow{\sigma_i}q_{i+1}\ldots q_{l-1}\xrightarrow{\sigma_{l-1}}q_l$, such that $(q_j,\sigma_j, q_{j+1})\in \delta$, for all $i \leq j< l$. A path is \emph{simple} if $q_j = q_{j'} \Rightarrow j=j'$, for all $i\leq j,j' \leq l$. A path $q_i\xrightarrow{\sigma_i}\ldots q_l\xrightarrow{\sigma_l}q_{l+1}$, where $q_i \ldots q_l$ is a simple path and $q_{l+1}=q_i$, is called a \emph{cycle}.

Let $\suc(q)= \{q' \mid \exists \sigma.\, (q,\sigma,q')\in \delta\}$ denote the set of successors of $q$.
Furthermore, let $dist(q,q')$ denote the length of the shortest simple path that begins in $q$ and ends in $q'$, \ie the minimal number of states in a sequence representing a path $q\ldots q'$. If no such path exists, then $dist(q,q')=\infty$. If $q = q'$, then $dist(q,q')=0$. A \emph{shortest} path from $q$ to $q'$ is a path minimizing $dist(q,q')$, and can be computed using, e.g., Dijkstra algorithm~(see, e.g., \cite{cormen} for details).

\new{Given a product automaton $\P = \T \otimes \B$, a valid trace of $\T$ satisfying the specification $\B$ can be generated by finding a simple path from $q_{\init,\P}$ (a trace prefix) to an accepting state $q_f$ and a cycle $q_f \xrightarrow{\sigma_i} \ldots \xrightarrow{\sigma_l} q_f$, which contains at least one non-silent $\sigma_j \in 2^\Pi$, for some $j \in \{i,\ldots,l\}$ (a periodically repeated trace suffix). Such a simple path and cycle can be found using efficient graph algorithms.}

\section{Problem Formulation and Approach}
\label{sec:pf}

In this section, we formally state our problem of multi-agent planning from individual LTL specifications. 
We outline the straightforwards solution based on the control strategy synthesis method presented in Sec.~\ref{sec:prelims:synthesis}, and we discuss the drawbacks of this solution. Finally, to cope with these drawbacks, we suggest an alternative appraoch that is futher elaborated in details in Sec.~\ref{sec:solution}.

\subsection{Problem Statement}

Let us consider $N$ agents, (e.g., robots in a partitioned environment). Each agent is modeled as a finite transition system $\T_i=(S_i,s_{init,i},R_i,\AP_i,L_i)$, for all $i \in \{1,\ldots, N\}$. States of the transition system correspond to states of the agents (\eg the robot's physical location in the regions of the environment) and the transitions between them correspond to the agent's capabilities to change the state (\eg the ability of the robots to move between two regions of the environment). We assume that $(s,s) \in R_i$, for all $s\in S_i$, i.e., that any agent $i$ can stay in its current state, and we assume that each state $s \in S_i$ is reachable from all states $s'\in S_i$, i.e., that any agent can return to a state where it already was in the past. 
\new{We consider that the agents' transitions are synchronized in time; they are triggered at the same time instant and whenever a transition of one agent is triggered, then a transition of every other agent is triggered as well.}
Without loss of generality, we assume that $\AP_i \cap \AP_j = \emptyset$, for all $i\neq j \in \{1,\ldots,N\}$\new{, and that the set of silent services is $\mathcal E = \{\varepsilon_i \mid i \in \{1,\ldots,N\}\}$.}

Each agent is given an LTL task $\phi_i$ over 
$\APs_i = \bigcup_{j \in d(i)} \AP_j$, for some $\{i\} \subseteq d(i) \subseteq \{1,\ldots,N\}$. Informally, the satisfaction of an agent's task depends on, and only on the behavior of the subset of agents $d(i)$, including the agent itself.
Formula $\phi_i$ is interpreted over the traces $\tau_j = s_{j,1}\varpi_{j,1}s_{j,2}\varpi_{j,2}\ldots$ of transition systems $\T_j$, where $j \in d(i)$. 
\new{More precisely, the agent $i$ decides the satisfaction of the formula $\phi_i$  based on the word $w(\tau_i)$ it produces and on the 
services of agents $\T_j, j\in d(i)$ provided at the time instances $\mathbb T(\tau_i)$. In other words, the agent $\T_i$  observes and takes into consideration the other agents' services only at the time instances, when $\T_i$ provides a service (even an empty one) itself. Formally, let 
$\mathfrak w_\epsilon = \varpi_1 \varpi_2 \ldots \in (2^{\APs_i} \cup \mathcal E)^\omega$, where $\varpi_k = \bigcup_{j \in d(i)} \varpi_{j,k}$ denote the sequence of (silent and non-silent) services associated with the set of traces $\mathfrak{T}_i=\{\tau_j \mid j \in d(i)\}$, and let $\mathbb T(\tau_i) = \mathbb T(w_\epsilon(\tau_i)) = k_1k_2\ldots$. The \emph{word produced} by $\mathfrak{T_i}$  is then a sequence 
\begin{align}
\label{eq:teamword}
& w(\mathfrak{T}_i) = w(\mathfrak w_\epsilon) = \omega_{k_1}\omega_{k_2}\ldots, \\ \nonumber & \text{such that }  \omega_{k_m} =  \varpi_{k_m} \cap 2^{\APs_i},\text{ for all } m\geq 1.
\end{align}
}

The set of traces $\mathfrak{T}_i$ is called valid if the word $w(\mathfrak{T}_i)$ is infinite, i.e. if $\tau_i$ is valid. The formula $\phi_i$ is satisfied on a valid set of traces $\mathfrak{T}_i$, if and only if $w(\mathfrak{T}_i) \models \phi_i$.

\begin{example}
Consider transition systems $\T_1, \T_2$, with $\AP_1 = \{a\}$, and $\AP_2 = \{b\}$, and their tasks $\phi_1 = a \, \wedge \, \Next \,  (a \, \wedge \, b)$, $\phi_2 = b \, \wedge \, \Next \, (b \, \wedge \, a)$. Note that both $1 \in d(2)$, and $2 \in d(1)$. For traces $\tau_1$, $w_\epsilon(\tau_1) = \{a\} \epsilon \epsilon \{a\} \{\} \epsilon \ldots$, and $\tau_2$,  $w_\epsilon(\tau_2) = \epsilon \epsilon \epsilon \{b\} \{b\} \epsilon \ldots$, formula $\phi_1$ is satisfied, as the word produced by $\mathfrak T_1$ is $w(\mathfrak T_1) = \{a\}\{a,b\}\{b\}\ldots$. In contrast, $\phi_2$ is not satisfied, because $w(\mathfrak T_2) = \{a,b\}\{b\}\ldots$. Both formulas are satisfied if $w_\epsilon(\tau_1)$ changes to $\{a\} \epsilon \epsilon \epsilon \{a\} \epsilon \ldots$
\end{example}

\begin{problem}
\label{prob:main}
\emph{Given} $N$ agents represented as transition systems $\T_i=(S_i,s_{init,i},R_i,\AP_i,L_i)$, and LTL formulas $\phi_i$ over
$\APs_i = \bigcup_{j \in d(i)} \AP_j$, for all $i \in \{1,\ldots, N\}$, \emph{find} a trace $\tau_i$ of each $\T_i$, such that $\mathfrak{T}_i = \{\tau_j \mid j \in d(i)\}$ is valid and satisfies the specification $\phi_i$, for all $i \in \{1,\ldots, N\}$.
\end{problem}
As each of the LTL formulas $\phi_i$, $i \in \{1,\ldots, N\}$ over $\APs_i$ can be translated into a language equivalent B\"uchi automaton, we can pose the problem equivalently as: 

\begin{problem}
\label{prob:main2}
\emph{Given} $N$ agents represented as transition systems $\T_i=(S_i,s_{init,i},R_i,\AP_i,L_i)$, and 
B\"uchi automata $\B_i = (Q_i, q_{\init, i}, \delta_i, \mathbf{\Sigma}_i=2^{\APs_i}, F)$, 
for all $i \in \{1,\ldots, N\}$, \emph{find} a trace $\tau_i$ of each $\T_i$, such that $\mathfrak{T}_i = \{\tau_j \mid j \in d(i)\}$ is valid and produces a word $w(\mathfrak{T}_i) \in \Lang(\B_i)$.
\end{problem}

\subsection{Straightforward Centralized Solution}
\label{sec:pf:cs}

An immediate solution to the Prob.~\ref{prob:main2} can be obtained by a slight modification to the standard control strategy synthesis procedure for transition systems from LTL specification (see Sec.~\ref{sec:prelims:synthesis}). 
Roughly, the procedure solving Prob.~\ref{prob:main2} include (1) partitioning the  set of agents into dependency classes similarly as in \cite{meng-cdc2013}, by iterative application of the rule that if $j \in d(i)$, then $\T_j$ belongs to the same dependency class as $\T_i$; (2) for each dependency class $D =\{ \T_{d_1},\ldots,\T_{d_m} \}$, constructing a transition system $\T_D$ with the set of states $S = S_{d_1}\times \ldots \times S_{d_m}$ that represents the synchronized behavior of agents within the class; (3) building a B\"uchi automaton $\B_D$, which accepts all the sequences $\mathfrak w_\varepsilon = \varpi_1\varpi_2 \ldots \in (2^{\APs_{D}} \cup \mathcal E)^\omega$, such that the produced word $w(\mathfrak w_\epsilon) \in (2^{\APs_D})^\omega$ (see Eq.~\ref{eq:teamword}) satisfies $\phi_i$, for all $T_i \in D$; (4)~constructing a product automaton $\P_D$ of $\T_D$ and $\B_D$; and (5) using  graph algorithms to find an accepting run of $\P_D$ that projects onto valid traces of $\T_i$, and accepting runs of $\B_i$, for all $T_i \in D$.

The outlined procedure is correct and complete; a solution is found if one exists and it is indeed a solution to Prob.~\ref{prob:main}. However, it suffers from a rapid growth of the product automaton state space with the increasing number of agents, leading to extreme computational demands that make the approach infeasible in practice. Particularly, if the size of $D$ is $N$, the product automaton $\T_D \otimes \B_D$ is $\mathcal{O}(\prod_{1\leq i \leq N} |\T_i|)$, which is approx. $|T|^N$.

\subsection{Our Approach}

In this work, we aim on reducing the high computational complexity of the straightforward solution. Our approach is to avoid the execution of an offline, centralized control strategy generation procedure and to decompose the strategy synthesis problem into short horizon planning problems that are solved online, upon the execution of the system, similarly as in model predictive control. As a starting point, we consider the problem definition from Prob.~\ref{prob:main2}.

In the sequel, we present an iterative method to select a temporary goal state for each agent within a short horizon, and compute and execute a finite trace fragment leading to this goal state. We show, that under certain assumptions, the repetitive implementation of the outlined algorithm leads to provable satisfaction of the desired specifications.
The solution leverages ideas from LTL control strategy synthesis and also the construction of intersection B\"uchi automata~\cite{principles}.
\section{Problem Solution}
\label{sec:solution}

In this section, we provide details of the proposed solution to Prob.~\ref{prob:main2}. First, we introduce the procedures that are executed in each iteration of the algorithm, followed by the summary of the overall method. Along the procedures presentations, two assumptions are imposed to ensure the correctness of the algorithm and we discuss how they can be relaxed towards the end of this section.

Besides the set of transition systems $\T_1, \ldots, \T_N$, and the specification automata $\B_1,\ldots, \B_N$, the inputs to each iteration of the algorithm are:
\begin{itemize}
\item current states of $\T_1,\ldots,\T_N$, denoted by $\currs_1, \ldots, \currs_N$, initially equal to $s_{\init,1},\ldots, s_{\init, N}$, respectively;
\item current states of $\B_1,\ldots,\B_N$, denoted by $\currq_1, \ldots, \currq_N$, initially equal to $q_{\init,1},\ldots, q_{\init, N}$, respectively;
\item linear ordering $\prec$ over $\{1,\ldots, N\}$, initially arbitrary;
\item a fixed horizon $\h \in \Nat$, which, loosely speaking, determines the depth of planning in the B\"uchi automata;
\item a fixed horizon $\H \in \Nat$ which, loosely speaking, determines the depth of planning in the transition systems.
\end{itemize}

\subsection{Intersection B\"uchi Automata}
\label{sec:intersection}

In each iteration of the algorithm, we construct local automata that represent the intersection of relevant B\"uchi automata up to a pre-defined horizon $\h$. We label their states with values that, simply put, indicate the progress towards the satisfaction of the desired properties. 
Later on, these values are used to set local goals in the  short horizon planning. 

We partition the set of B\"uchi automata $\Spec = \{\B_1,\ldots, \B_N\}$ into the smallest possible subsets $\Phi_1,\ldots,\Phi_M$, such that any transition of any $\B_i \in \Phi_\ell$ up to horizon $\h$ from the current state does not impose restrictions on the behavior of any agent $\T_j$ with the property that $\B_j \not \in \Phi_\ell$. {This partition corresponds to the current \emph{necessary and sufficient dependency} between agents up to the horizon $\h$, and can dynamically change over the time.}

\begin{definition}[Participating Services] Formally, we call a set of services $\AP_j$, $j \in d(i)$  \emph{participating} in $q\in Q_i$ if 
\begin{itemize}
\item[(i)] $j=i$, or 
\item[(ii)] there exist  $q'\in Q_i$,  $\sigma \in \Alpha_i$, and $\varsigma \subseteq \AP_j $ such that $(q,\sigma, q') \in \delta$, and $(q, (\sigma \setminus \AP_j) \cup \varsigma, q') \not \in \delta$.
\end{itemize} 
\end{definition}
Intuitively, a set of services $\AP_j$ is participating in $q$, if some transition leading from $q$ imposes restrictions on the services provided by agent $j$. 
\begin{definition}[Alphabet up to Horizon $h$] For a state $q \in Q_i$, we define the alphabet $\Alpha_i^{\h}$ of $\B_i$ up to the horizon $\h$ as $\Alpha_i^{\h}(q) = 2^{\APs_i^h(q)}$, where 
$$\APs_i^h(q) = \bigcup_{\substack{q' \in \hat \delta_i^k(q)\\0\leq k \leq \h}} \{\AP_j \mid \AP_j \text{ is a participating service in } q'\}.$$
\end{definition}
\begin{definition}[Dependency Equivalence and Partition] 
\label{def:dep}
Given that $\currq_1, \ldots, \currq_N$ are the respective current states of B\"uchi automata $\B_1,\ldots, \B_N$, the partition of the set of B\"uchi automata $\Spec$ is induced by the dependency equivalence $\sim^h$ defined on $\Spec$ as follows:
\begin{itemize}
\item $\B_i \sim^h \B_i$
\item if there exists $\B_k$, such that $\B_i \sim^h \B_k$, and $\AP_j  \subseteq \APs_k^h(\currq_k)$ or $\AP_k  \subseteq \APs_j^h(\currq_j)$, then also $\B_i \sim^h \B_j$.
\end{itemize}
The desired partition is then $\{ \Phi_1,\ldots, \Phi_M \}$, with the property that $(\B_i \sim^h \B_j) \iff (\B_i \in \Phi_\ell \iff \B_j \in \Phi_\ell)$. 
We associate each subset of B\"uchi automata $\Phi_\ell$ with the set of indexes $I_\ell$, such that $\B_i \in \Phi_\ell \iff i \in I_\ell$.
\end{definition}

Note, that planning within the horizon $\h$ can now be done separately for each $\Phi_\ell$. 
Thus, from now on, in the remainder of this section and Sec.~\ref{sec:product} and~\ref{sec:plan}, let us concentrate on planning for a \emph{dependency class} of agents and specifications given by $I_\ell = \{1_\ell, \ldots, n_\ell\}$, for a fixed $\ell$.

\smallskip

We are now ready to define the B\"uchi automata intersection up to the horizon $\h$, for  $\Phi_\ell = \{\B_{1_\ell},\ldots ,\B_{n_\ell}\}$. Let ${i_\ell} \prec {j_\ell}$, for all $1\leq i<j \leq n$. In other words, we assume, without loss of generality, that the automata in $\Phi_\ell$ are ordered according to $\prec$.
\begin{definition}[Intersection Automaton]
\label{def:BA}
$ $ \\ The intersection automaton of $\B_{1_\ell}, \ldots, \B_{n_\ell}$ up to horizon $\h$ is $\A^h = (Q_\A, q_{\init, \A}, \Alpha_\A, \delta_\A, F_\A)$, where
\begin{itemize}
\item $Q_\A \subset Q_{1_\ell} \times \ldots \times Q_{n_\ell} \times \Nat$ is a finite set of states, generated as described below;
\item $q_{\init, \A} = (\currq_{1_\ell}, \ldots, \currq_{n_\ell}, 1)$;
\item \new{$\Alpha_\A = \{ \sigma \in 2^{\mathbf{\Pi}_\A} \mid \forall i_\ell \in I_\ell.  \ \sigma \cap 2^{\mathbf{\Pi}_{i_\ell}} \neq \emptyset \Rightarrow \epsilon_{i_\ell} \not \in \sigma\}$, where $\mathbf{\Pi}_\A = \bigcup_{i \in I_\ell} \AP_i \cup \bigcup_{i \in I_\ell} \{\epsilon_i\}$;}
\item  Let $Q_\A^0 = \{ q_{\init,\A} \}$. \\
For all $1 \leq j \leq h$, we define $(q_{1_\ell}', \ldots, q_{n_\ell}', k') \in Q_\A^{j}$ and $\big((q_{1_\ell}, \ldots, q_{n_\ell}, k), \sigma, (q_{1_\ell}', \ldots, q_{n_\ell}', k')\big) \in \delta_\A^j$ iff
\begin{itemize}
\item[i)] $(q_{1_\ell}, \ldots, q_{n_\ell}, k) \in Q_\A^{j-1}$, 
\item[ii)] \new{for all $i_\ell \in I_\ell$, either
\begin{itemize}
\item[$\circ$]$(q_{i_\ell}, \sigma \cap \APs_{i_\ell} ,q_{i_\ell}') \in \delta_{i_\ell}$, or
\item[$\circ$] $q_{i_\ell} = q_{i_\ell}'$, and $\epsilon_{i_\ell} \in \sigma$
\end{itemize}}
\item [iii)] \[ k' = \left\{ 
  \begin{array}{l l}
   k+1 & \quad \text{if $q_{{(k \mod n)}_\ell}  \in F_{{(k \mod n)}_\ell}$},\\
   k & \quad \text{otherwise.}
  \end{array} \right.\]
\end{itemize}
Finally, $Q_\A = \bigcup_{0 \leq j \leq \h } Q_\A^j  \ \  \text{  and   } \  \ \delta_\A  = \bigcup_{1 \leq j \leq \h } \delta_\A^j; $
\item $F_\A = \{ (q_{1_\ell},\ldots,q_{n_\ell}, k) \in Q_\A \setminus \{q_{\init, \A}\} \mid $ \\ $~\ \ \ \ \ \ \ \ \ \ q_{(k \mod n)_\ell} \in F_{(k \mod n)_\ell} \}$.
\end{itemize}
\end{definition}
The intersection automaton is not a B\"uchi automaton as it does not exhibit infinite runs. However, it is an automaton that reads finite words and thus, it can be viewed as a graph. Through $k$, we remember which accepting states of which $\B_{i_\ell}$ have been visited on a run towards the respective state;  accepting states of all $\B_{1_\ell},\ldots, \B_{i_\ell}$ have been visited on each path from $q_{\init,\A}$ to the state with $k=i +1$. Thus, intuitively, the greater $k$ translates to the greater progress towards satisfaction of the individual specifications ordered according to $\prec$.

\begin{assumption} 
\label{assump:BA}
Assume that $F_\A$ is not empty. 
\end{assumption}

Intuitively, this assumption captures that at least a state which ensures a progress towards the satisfaction of the highest-order specification $\B_{1_\ell}$ is present in $\A$. This allows us to identify local goal states in $\T_{1_\ell}, \ldots, \T_{n_\ell}$ in the following subsection. Without this assumption, we would not be able to distinguish between ``profitable'' and ``profitless'' transitions of agents with respect to $\mathbf{\Phi}$.
We analyze conditions under which Assump.~\ref{assump:BA} can be violated and propose a solution to its relaxation in Sec.~\ref{sec:relax:BA}.

\begin{definition}[Progressive Function for $\A$]
\label{def:VB} The progressive function $V_{\A}: Q_\A \to \Nat_0 \times \mathbb{Z}_0^-$ is for a state $q=(q_{1_\ell}, \ldots, q_{n_\ell}, k)$ defined as follows:
\begin{align*}
V_{\A}(q) =  \big(k,-\min_{q_f \in F_\A} dist(q,q_f)\big).
\end{align*}
\end{definition}

The increasing value of $V_{\A}$ indicates a progress towards the satisfaction of the individual local specifications in $\Phi_\ell$, ordered according to $\prec$. No progress can be achieved from state $q$, such that $V_{\A}(q) = (k,-\infty)$ within the horizon $\h$, and hence, we remove these 
from $\A$. From Assump.~\ref{assump:BA}, we have that $V_{\A}(q_{\init,\A}) = (1, d)$, where $d\neq -\infty$.

\subsection{Product System}
\label{sec:product}
The intersection automaton and its progressive function allows us to define which services should be provided in order to make a progress towards satisfaction of the specification. The remaining step is to plan the transitions of the individual agents to reach states in which these services are offered. We do so through definition of a product system that captures the allowed behaviors (finite trace fragments) of agents from $I_\ell$ up to horizon $\H$. The states of the product system are evaluated based on the progressive function of $\A$, to indicate their progress towards satisfaction of the formula.

\begin{definition}[Product System]
\label{def:PA}
The product system up to the horizon $\H$ of the agent transition systems $\T_{i_\ell}, i_\ell \in I_\ell$, and the intersection B\"uchi automaton $\A^h$ from Def.~\ref{def:BA} is an automaton $\P^\H= (Q_\P, q_{\init,\P},  \Alpha_\P, \delta_\P)$, where

\begin{itemize}
\item $Q_\P \subset S_{1_\ell} \times \ldots \times S_{n_\ell} \times Q_\A$ is a finite set of states, generated as described below;
\item $q_{\init,\P} = (\currs_{1_\ell},\ldots,\currs_{n_\ell},q_{\init,\A})$;
\item \new{$\Alpha_\P =\Alpha_\A;$} 
\item Let $Q^0_\P = \{ q_{\init,\P} \}$. \\
For all $1 \leq j \leq H$, $(s_{1_\ell}', \ldots, s_{n_\ell}', q') \in Q_\P^{j}$ and
$\big((s_{1_\ell},\ldots,s_{n_\ell},q),\sigma,((s_{1_\ell}',\ldots s_{n_\ell}',q')\big)\in\delta_\P^j$
iff for all $i \in \{1,\ldots, n\}$, either
\new{$s_{i_\ell} = s_{i_\ell}', \sigma \cap \Pi_{i_\ell} \subseteq L(s_{i_\ell})$ and $(q,\sigma,q') \in \delta_\A$, or $(s_{i_\ell}, s_{i_\ell}') \in R_{i_\ell}$, $\epsilon_{i_\ell} \in \sigma$ and $(q,\sigma,q') \in \delta_\A$.}\\

Finally, $Q_\P = \bigcup_{0 \leq j \leq H} Q^j_\P \ \ \text{ and } \ \ \delta_\P  = \bigcup_{1 \leq j \leq H} \delta^j_\P. $
\end{itemize}
\end{definition}

The set of accepting states $F_\P$ is not significant for the further computations, hence we omit it from $\P^\H$. The tuple $\P^\H$ is an automaton and can be viewed as a graph (see Sec.~\ref{sec:prelims}). A path 
$p = q_1\xrightarrow{\sigma_1}q_2 \ldots q_{m-1}\xrightarrow{\sigma_{m-1}}q_m$ in $\P^\H$, where $q_1 = q_{\init,\P}$ can be projected onto a finite trace  prefix $\tau_{i_\ell}(p)$ of each $\T_{i_\ell}$, $i_\ell \in I_\ell$ in the expected way: the $j$-th state of $\tau_{i_\ell}(p)$ is $s_{i_\ell}$ if the $j$-th state of $p $ is $q_j=(s_{1_\ell}, \ldots, s_{n_\ell},q_\A)$, and the $j$-th set of services of $\tau_{i_\ell}(p)$ is $\sigma_j \cap (\Pi_{i_\ell} \cup \{\epsilon_{i_\ell}\})$, for all $j \in \{1,\ldots m\}$, and $j \in \{1,\ldots m-1\}$, respectively.
The path $p$ can be naturally projected onto a finite run prefix of the intersection automaton $\A^h$ and onto finite run prefixes of individual B\"uchi automata $\B_{i_\ell}$, too. Particularly, the $j$-th state of the run prefix $\rho_{\A}(p)$ of $\A^h$ is $q_\A = (q_{\A,1_\ell},\ldots, q_{\A,n_\ell}, k)$ if the $j$-th state of $p$ is $q_j = (s_{1_\ell},\ldots,s_{n_\ell},q_\A)$, for all $j \in \{1,\ldots, m\}$; the $j$-th state of the run prefix $\rho_{{i_\ell}}(p)$ of $\B_{i_\ell}$ is then the state $q_{\A,i_\ell}$.

\begin{definition}[Progressive Function and State] The progressive function $V_{\P}: Q_\P \to \Nat_0 \times \mathbb{Z}_0^-$ is inherited from the intersection automaton $\A^h$ (Def., \ref{def:VB}), \ie for all $(s_{1_\ell},\ldots, s_{n_\ell},q) \in Q_\P$,
$V_\P\big((s_{1_\ell},\ldots, s_{n_\ell},q)\big)= V_{\A}(q).$

A  state $q\in Q_\P$ is a \emph{progressive state} if $V_\P(q) > V_\P(q_{\init, \P})$. A \emph{maximally progressive state} is a progressive state $q$, such for all $q' \in Q_\P$, it holds $V_\P(q) \geq V_\P(q')$. 
\end{definition}

\subsection{Plan Synthesis}
\label{sec:plan}

Given $\P^\H$, we compute the local plan as the shortest path $p = q_1\xrightarrow{\sigma_1}q_2 \ldots q_{m}\xrightarrow{\sigma_{m}}q_{\mathit{max}}$ from $q_1=q_{\init,\P}$ to ~$q_{\mathit{max}}$, \new{such that $\sigma_k \cap \T_{1_\ell} \neq \emptyset$, for some $k \in \{1,\ldots, m\}$, and $q_\mathit{max}$ is a maximally progressive state reachable through such a path. The path can be computed using efficient graph algorithms, in linear time with respect to the size of $P^\H$.
We assume that such a path exists and show how to relax the assumption further  in Sec.~\ref{sec:relax:PA}. } 

\begin{assumption}
\label{assump:PA}
Assume that in $\P^\H$, there exists at least one progressive state $q_\mathit{p}$ reachable through a finite path $q_{\init,\P}\xrightarrow{\sigma_1}q_2 \ldots q_{m}\xrightarrow{\sigma_{m}}q_{\mathit{p}}$, such that $\sigma_k \cap \T_{1_\ell} \neq \emptyset$, for some $k \in \{1,\ldots, m\}$.
\end{assumption}

The projection of the found path onto individual agent transition systems gives finite trace prefixes $\tau_{i_\ell}(p) = \currs_{i_\ell} \varpi_{i_\ell,1} s_{i_\ell,2} \ldots s_{i_\ell,m}$, to be followed by each agent $i_\ell \in I_\ell$. \new{Furthermore, it is guaranteed that at least agent $\T_{1_\ell}$ will provide at least one non-silent service along its trace prefix.}

\subsection{Plan Execution}

Finally, in each iteration the individual trace prefixes $\tau_i(p) = \currs_{i} \varpi_{i,1} s_{i,2} \varpi_{i,2} \ldots s_{i,m_i}$ computed in the previous steps are executed as follows. Each agent $\T_{i}$, $i \in \{1,\ldots, N\}$ provides the services $\varpi_{i,1} \in L(\currs_i)$, and executes the transition to the state $s_{i,2}$.
At the same time, the current state of each B\"uchi automaton $\B_i$, $i \in \{1,\ldots, N\}$ is updated to the second state $q_{i,2}$ of the run prefix $\rho_i(p)=\currq_i q_{i,2}\ldots q_{i,m_i}$ obtained by the projection of $p$ onto $\B_i$. If $q_{i,2} \in F_i$, then the ordering $\prec$ is also updated, in such a way that $i$ becomes of the lowest order, \ie $j,j' \prec i$ for all $j,j' \in \{1,\ldots, N\} \setminus \{i\}$, while maintaining the mutual ordering of $j$ and $j'$. Loosely speaking, this change reflects that a progress towards the satisfaction of specification $\B_i$ has been made and in the following iteration, we focus on {making progress towards the satisfaction of the remaining specifications.}

\begin{algorithm}[!h]
\caption{Solution to Prob.~\ref{prob:main2}}
\label{alg:main}
\begin{algorithmic}[1]
\small
\INPUT Transition systems $\T_1,\ldots, \T_N$; B\"uchi automata $\B_1,\ldots, \B_N$; horizons $\h \in \Nat$, $H\in \Nat$.
\OUTPUT \emph{system execution} $(\tau_1,\ldots, \tau_N,\rho_1,\ldots,\rho_N)$, where \\ $\tau_1,\ldots,\tau_N$ are traces of  $\T_1,\ldots, \T_N$, and $\rho_1,\ldots, \rho_N$ are runs of $\B_1,\ldots,\B_N$, respectively
\STATE $\prec\,:=(1,\ldots,N)$; $\currs_i:=s_{\init,i}$ $\currq_i := q_{\init,i}, \forall i \in \{1,\ldots, N\}$
\WHILE {$\true$}
\STATE compute the partition $\{I_1,\ldots,I_M\}$ (Def.~\ref{def:dep})
\FORALL{ $\ell \in \{1,\ldots,M\}$}
\STATE construct $\A^h$ (Def.~\ref{def:BA})
\STATE construct $\P^H$ (Def.~\ref{def:PA})
\STATE find a shortest path $p$ to a max. progressive state in~$\P^\H$
\ENDFOR
\FORALL {$i \in \{1,\ldots, N\}$, suppose that \\
$\tau_i(p) = \currs_i\varpi_{i,1}s_{i,2}\ldots s_{i,m_i}, \rho_i(p) = \currq_iq_{i,2}\ldots,q_{i,m_i},$}
\STATE provide services $\varpi_{i,1} \in L(\currs_i)$
\STATE $\currs_i :=  s_{i,2}$; $\currq_i := q_{i,2}$
\IF {$\currq_i \in F_i$}
\STATE reorder $\prec$, s.t. $j \prec i$, for all $j \in \{1,\ldots,N\} \setminus \{i\}$
\ENDIF
\ENDFOR
\ENDWHILE 
\end{algorithmic}
\end{algorithm}

This step has finalized one iteration of the algorithm and at this point, the next iteration is to be performed, starting with building the intersection automaton in Sec.~\ref{sec:intersection}. The overall solution is summarized in Alg.~\ref{alg:main}.

\new{
\subsection{Correctness}

\begin{lemma}
A system execution $(\tau_1,\ldots, \tau_N, \rho_1,\ldots, \rho_N)$ computed by Alg.~\ref{alg:main} satisfies the following, for all $i \in \{1,\ldots, N\}$:
\begin{itemize}
\item[(i)] given that $\rho_i = q_{i,1}q_{i,2}\ldots$, $i \in \{1,\ldots, N\}$, and $\mathbb T({ \tau_i}) = k_1k_2\ldots$, the sequence $\varrho_i = q_{i,k_1} q_{i,k_2}\ldots$ is a run of $\B_i$, and furthermore $q_{i,1} = \ldots = q_{i,k_1-1}$, and $q_{i,k_j+1} = \ldots = q_{i,k_{j-1}-1}$, for all $j \geq 1$.
\item[(ii)] $\tau_i$ is a valid trace of $\T_i$.
\item[(iii)] $\rho_i$ contains infinitely many states $q_f \in F_i$.
\end{itemize}
\end{lemma}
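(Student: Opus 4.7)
The plan is to prove (i) and the transition-level half of (ii) by a direct structural induction on the iterations of Alg.~\ref{alg:main}, and to prove (iii) --- together with the infinitude of the produced word that completes (ii) --- by a progress argument combining Assump.~\ref{assump:PA} with the reordering of $\prec$ at line~13.

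For (i), I would induct on the iteration count. At each iteration, line~11 updates $\currq_i$ to the second state $q_{i,2}$ of the projection of the planned path $p$ onto $\B_i$. By Def.~\ref{def:BA}, if the first label $\sigma_1$ of $p$ contains $\epsilon_i$ then $q_{i,2}=q_{i,1}$, whereas if $\epsilon_i\notin\sigma_1$ then $(q_{i,1},\sigma_1\cap\APs_i,q_{i,2})\in\delta_i$. The former case is precisely the iteration in which agent $i$ provides $\varpi_{i,k}=\varepsilon$, the latter the iteration in which it provides a (possibly empty but) non-silent $\varpi_{i,k}\subseteq L(\currs_i)$. Concatenating across iterations yields both the equality patterns $q_{i,1}=\ldots=q_{i,k_1-1}$ and $q_{i,k_j+1}=\ldots=q_{i,k_{j+1}-1}$, and, by stringing together the non-silent updates, that $\varrho_i=q_{i,k_1}q_{i,k_2}\ldots$ is a legitimate run of $\B_i$ on the word $w(\mathfrak T_i)$ from~(\ref{eq:teamword}).

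For the transition-level portion of (ii), the analogous induction using Def.~\ref{def:PA} shows that each consecutive pair $(\currs_i,s_{i,2})$ is either a self-loop paired with a non-silent $\varpi_{i,1}\subseteq L(\currs_i)$ or an $R_i$-edge paired with $\varpi_{i,1}=\varepsilon$, so $\tau_i$ respects the transition relation of $\T_i$; infinitude of $\tau_i$ as a sequence of states follows because Assump.~\ref{assump:PA} keeps the while-loop well-defined at every iteration. The remaining requirement for (ii) is that $w(\tau_i)$ be infinite, which by~(i) amounts to $\rho_i$ undergoing infinitely many genuine $\B_i$-transitions; this in turn follows from~(iii), since every visit of $\rho_i$ to $F_i$ necessarily coincides with a non-silent service of agent $i$.

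The main obstacle is (iii). At each iteration, for each class $\Phi_\ell$, Assump.~\ref{assump:PA} together with the maximally-progressive selection yields a target $q_{\mathit{max}}$ with $V_\A(q_{\mathit{max}})>V_\A(q_{\init,\A})$; executing the first edge of the chosen shortest path to $q_{\mathit{max}}$ therefore either (a) already increments the $k$-component of $V_\A$, witnessing $\currq_{(k\bmod n)_\ell}\in F_{(k\bmod n)_\ell}$ and triggering the reordering of $\prec$ at line~13, or (b) strictly decreases the distance to $q_{\mathit{max}}$ inside $\A^h$ while keeping $k$ fixed. Since distances in $\A^h$ are bounded by $\h$, case~(b) can recur only for finitely many consecutive iterations before case~(a) must occur, so within every bounded window of iterations some agent of the class is accepted. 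The reordering rule then sends the accepted agent to the back of $\prec$ while preserving the relative order of the remaining $N-1$ agents, and a counting argument on the finite queue $\prec$ shows that every agent reaches the front of its class infinitely often and consequently has $F_i$ visited infinitely often. The delicate part I expect is reconciling this front-of-queue argument with the dynamic re-partitioning of $\{\Phi_1,\ldots,\Phi_M\}$ between iterations; my plan there is to combine the finiteness of admissible partitions, the permanence of every reorder-to-back operation, and a pigeonhole argument on the infinite iteration sequence to rule out indefinite starvation of any agent from the front of its class.
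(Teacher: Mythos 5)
Your overall route is the same as the paper's: (i) and the transition-level part of (ii) are read off directly from Defs.~\ref{def:BA} and~\ref{def:PA} by checking what one iteration does to $\currs_i$ and $\currq_i$ (silent service $\Rightarrow$ B\"uchi state frozen, non-silent service $\Rightarrow$ a $\delta_i$-transition on $\bigcup_{j\in d(i)}\varpi_j$), and (iii) is a progress argument on the most prioritized agent using Assump.~\ref{assump:BA} and~\ref{assump:PA} together with the reordering of $\prec$ on lines 12--14. The persistence-of-target and dynamic re-partitioning subtleties you flag as delicate are real, but the paper's own proof handles them no more rigorously than you do (it simply asserts that $q_{\mathit{max}}$ is still present at time $t+1$ and that any plan change is to a more progressive target), so you are not below the paper's standard there.

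One step does not hold as stated: you derive the infinitude of $w(\tau_i)$ from (iii) by claiming that every visit of $\rho_i$ to $F_i$ coincides with a non-silent service of agent $i$. That is false. If agent $i$ provides $\epsilon_i$ forever after once reaching an accepting state, then $\rho_i$ ends in the constant tail $q_f q_f \ldots$ and literally contains infinitely many states of $F_i$, while $w(\tau_i)$ is finite; nothing in the definition of a run of the product forces a $\delta_i$-transition at an accepting position. The correct source of word-infinitude is the constraint imposed in Sec.~\ref{sec:plan} and Assump.~\ref{assump:PA} that every planned path contains a non-silent service of the top-priority agent $\T_{1_\ell}$, combined with the fact (from lines 12--14) that every agent becomes top-priority infinitely often --- which is exactly how the paper argues it (``at least one non-silent service is provided by $\T_i$ on this path''). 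You already have both ingredients in your (iii) argument, so the repair is local, but as written the inference from (iii) to the remaining half of (ii) is broken and should be rerouted through the non-silent-service requirement rather than through membership in $F_i$.
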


\begin{proof}
Let $t$ be an arbitrary time instant, and let $\tau_1^t,\ldots,\tau_N^t, \varpi_1^t,\ldots,\varpi_N^t,\rho_1^t,\ldots,\rho_N^t$ denote the  current states and provided services of $\T_1,\ldots,\T_N$, and the current states of $\B_1,\ldots,\B_N$ at time $t$, respectively. Then, directly from the constructions of the intersection automaton and the product, we have the following: for all $i$, it holds that $(\tau_i^t,\tau_i^{t+1})\in R_i$. Furthermore, if $\varpi_i^t =\epsilon_i$, then $q_{i}^t = q_{i}^{t+1}$. On the other hand, if $\varpi_i^t \neq \epsilon$, then  $ q_{i}^{t+1} \in \delta(q_{i}^t, \bigcup_{j \in d(i)} \varpi_j^t)$.

Consider that $i$ is the most prioritized agent at time $t$, i.e., that $i \prec j$, for all $j \in \{1,\ldots, N\}$. Let $\tau_i'$ and $\rho_i'$ are the finite trace and run prefixes of $\T_i$, $\B_i$ computed by Alg.~\ref{alg:main} on lines~7--9 at time $t$ to a maximally progressive state $q_\mathit{max}$ of $\P^H$. Then, intuitively, at time $t+1$, this state is also present in $\P^H$. If a plan is changed to reach $q'_\mathit{max}$, then $q'_\mathit{max}$ is ``more progressive'' than $q_\mathit{max}$, and thus closer to reaching an accepting state of $\B_i$. 
Altogether, thanks to the Assump.~\ref{assump:BA} and Assump.~\ref{assump:PA}, we can state that a state $q_\mathit{max}$, which projects onto an accepting state $q_f$ of $\B_i$ is reached. At the same time, it is ensured that at least one non-silent service is provided by $\T_i$ on this path. Furthermore, lines 12--14 of Alg.~\ref{alg:main} ensure, that each $i \in \{1,\ldots,N\}$ will repeatedly become the most prioritized. Putting everything together, we can conclude that the lemma holds.
\end{proof}

\begin{corollary}
A system execution $(\tau_1,\ldots, \tau_N, \rho_1,\ldots, \rho_N)$ returned by Alg.~\ref{alg:main} provides a solution to Prob.~\ref{prob:main2}. 
\end{corollary}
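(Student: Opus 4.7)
The plan is to derive the corollary directly from the preceding lemma by unfolding the two requirements of Problem~\ref{prob:main2} for each fixed agent index $i \in \{1,\ldots,N\}$: (a) that the trace set $\mathfrak{T}_i = \{\tau_j \mid j \in d(i)\}$ is valid, and (b) that the word $w(\mathfrak{T}_i)$ it produces lies in $\Lang(\B_i)$. Properties (i)--(iii) of the lemma are tailored to these two checks, so the corollary should reduce to a bookkeeping argument matching the lemma's $\varrho_i$ to an accepting run of $\B_i$ on $w(\mathfrak{T}_i)$.

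For (a), I would apply property (ii) directly: since $\tau_i$ is a valid trace of $\T_i$, the sequence $w_\varepsilon(\tau_i)$ contains infinitely many non-silent entries, hence $\mathbb{T}(\tau_i)$ is infinite. By the definition of $w(\mathfrak{T}_i)$ in~(\ref{eq:teamword}) (which only looks at the time instances $\mathbb{T}(\tau_i)$), this is exactly the validity condition stated in Section~\ref{sec:pf}. For (b), I would show that the subsequence $\varrho_i = q_{i,k_1}q_{i,k_2}\ldots$ from property (i) is an accepting run of $\B_i$ on $w(\mathfrak{T}_i)$. The key observation is that, by the intersection/product construction (Def.~\ref{def:BA}), a transition of $\B_i$ is taken in $\rho_i$ precisely when the joint service symbol in the associated product edge is non-silent for $\T_i$, i.e., at a time instant $k_m \in \mathbb{T}(\tau_i)$; at other time instants property (i) guarantees $q_{i,k_m+1} = \cdots = q_{i,k_{m+1}-1}$, so contracting these stuttering segments produces exactly $\varrho_i$. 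The symbol consumed at step $m$ is $\bigcup_{j \in d(i)}\varpi_{j,k_m} \cap 2^{\APs_i} = \omega_{k_m}$, which matches the $m$-th letter of $w(\mathfrak{T}_i)$ by~(\ref{eq:teamword}). Finally, every visit of $\rho_i$ to $F_i$ necessarily occurs at a $\B_i$-transition step, hence at some $k_m$, so by property (iii) the run $\varrho_i$ also visits $F_i$ infinitely often, making it accepting.

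I expect the main obstacle to be a careful verification of the symbol-matching step: one needs to confirm that the product edge used at time $k_m$ by agent $i$ is indeed labeled, on its $\APs_i$-coordinates, by the union of the services \emph{currently} provided by the agents in $d(i)$, even though the plan at time $k_m$ may belong to a different dependency class $\Phi_\ell$ computed in a later iteration than the one that originally scheduled the transition in $\B_i$. This reduces to noting that the partition in Def.~\ref{def:dep} is defined so that agents outside the class of $\B_i$ do not participate in any transition of $\B_i$ reachable within horizon $\h$, so the restriction to $\APs_i$ in~(\ref{eq:teamword}) is insensitive to the services of other classes and agrees with the label on the product edge; the rest is a routine index chase through Defs.~\ref{def:BA} and~\ref{def:PA}.
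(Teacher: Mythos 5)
Your proposal is correct and follows exactly the route the paper intends: the paper states the corollary without further proof as an immediate consequence of the preceding lemma, and your argument is precisely the expected unfolding of that implication --- property (ii) gives validity of $\mathfrak{T}_i$, and properties (i) and (iii) give that the contracted run $\varrho_i$ is an accepting run of $\B_i$ over $w(\mathfrak{T}_i)$. Your additional care about symbol-matching across dependency classes is a sound (and welcome) elaboration of a detail the paper leaves implicit in Def.~\ref{def:dep}.
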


}

\subsection{Relaxing the Assumptions}
\label{sec:relax}
\subsubsection{Relaxing Assump.~\ref{assump:BA}} 
\label{sec:relax:BA}
Intuitively, Assump.~\ref{assump:BA} may be violated from two different reasons: First, if the selected horizon $\h$ is too short, and although $F_\A = \emptyset$, there exists $\h' > h$, such that $F_\A \neq \emptyset$ in $\A^{h'}$. Second, if $F_\A = \emptyset$ even for $h \to \infty$, i.e., if a wrong step was executed in the past that lead to the infeasibility of the formula. We show, how to identify the reason of the assumption violation and propose an approach to its relaxation.

Consider that $\A^h$ is built according to Def.~\ref{def:BA} and that $F_\A= \emptyset$. In short, we systematically extend the horizon $h$ and update the automaton $\A^h$ until a set of states $F_\A$ becomes nonempty, or until the extension does not change the automaton $\A^h$ any more. 
 In the former case, the automaton $\A^h$ with the extended horizon satisfies Assump.~\ref{assump:BA} and thus is used in constructing $\P^H$, maintaining the remainder of the solution as described in Sec.~\ref{sec:product} and \ref{sec:plan}. In the latter case, the specification has become infeasible, indicating that a wrong step has been made in past.
Therefore, we backtrack along the executed solution to a point when another service could have been executed instead of the one that has been already done. Intuitivelly, we ``undo'' the service, we pretend that it has not been provided and mark this service as forbidden in the specification automata. The backtracking procedure is roughly summarized in Alg.~\ref{alg:backtrack}.

\begin{remark}
In order to perform the backtracking, the system execution prefixes have to be remembered. To reduce the memory requirements, note that cycles between two exact same system execution states can be removed from the system execution prefixes without any harm.

As there are only finitely many transitions possible in each system state of each transition system and each B\"uchi automaton, the backtracking procedure will ensure that eventually, the agents' trace prefixes will be found by Alg.~\ref{alg:main} without any further backtracking. Intuitively, this happens in the worst-case after the backtracking procedure rules out all the possible wrong transitions of the agents (line 5).
\end{remark}

\subsubsection{Relaxing Assump.~\ref{assump:PA}} 
\label{sec:relax:PA}
Once Assump.~\ref{assump:BA} holds, there is only one reason for violation of Assump.~\ref{assump:PA}, which is that the planning horizon $\H$ is not long enough. To cope with such a situation, we systematically extend the horizon $\H$ similarly as we extended $\h$ in the B\"uchi automaton. Eventually, a progressive state will be found.

\begin{algorithm}[!h]
\caption{Backtracking}
\label{alg:backtrack}
\begin{algorithmic}[1]
\footnotesize
\INPUT  Transition systems $\T_1,\ldots, \T_N$; B\"uchi automata $\B_1,\ldots, \B_N$; \emph{System execution prefix} $(\tau_1^{\mathfrak t},\ldots,\tau_N^{\mathfrak t},\rho_1^{\mathfrak t},\ldots,\rho_N^{\mathfrak t})$ up to the current time $\mathfrak t$, where $\tau_i^{\mathfrak t} = s_{i,1}\varpi_{i,1}\ldots \varpi_{i,{\mathfrak t}-1}s_{i,\mathfrak t}$, and $\rho_i^{\mathfrak t} = \rho_{i,1} \ldots \rho_{i,\mathfrak t}$, for all $i \in \{1,\ldots N\}$.
\OUTPUT Updates to B\"uchi automata $\B_1,\ldots, \B_N$
\STATE $k := \mathfrak t$
\WHILE {plan not found}
\STATE $k := k -1$
\STATE Check, if the execution of $\bigcup_{i \in \{1,\ldots,N\}} \varpi_{i,k}$ can lead to a different set of states of B\"uchi automata than to $q_{1,t},\ldots, q_{N,t}$. If so, apply the change and goto line 6.
\STATE Forbid the execution of $\bigcup_{i \in \{1,\ldots,N\}} \varpi_{i,k}$ in the states $q_{1,k},\ldots, q_{N,k}$ of each respective automaton $\B_1,\ldots,\B_N$
\STATE Execute one iteration of  Alg.~\ref{alg:main}, line 3--16, from $\currs_1 = s_{1,k},\ldots, \currs_N=s_{N,k}, \currq_1 = q_{1,k},\ldots, \currq_N=q_{N,k}$
\STATE If a plan was found in line 5, continue with execution of Alg.~\ref{alg:main}, otherwise goto line 2 of Backtracking.
\ENDWHILE
\end{algorithmic}
\end{algorithm}

\begin{remark}
Note, that Assump.~\ref{assump:BA} and~\ref{assump:PA} can be enforced by the selection large enough $h$ and $H$, respectively. Particularly, $h \geq \max_{i\in N} |Q_i|$, and $H \geq \max_{i\in N}|S_i|$ ensures the completeness of our approach. However, in such a case, the complexity of the proposed approach meets the complexity of the centralized solution {discussed in~Sec.~\ref{sec:pf:cs}}.
\end{remark}
\section{Example}
\label{sec:simulations}
To demonstrate our approach and its benefits, we present an illustrative example of three mobile robots operating in a common workspace depicted in Fig.~\ref{fig:example}.(A). The agents can transit in between the adjacent cells of the partitioned environment and they can each provide various services. Agent 1 can load ($l_H,l_A,l_B,l_C$), carry, and unload ($u_H,u_A,u_B,u_C$) a heavy object $H$ or a light object $A$, $B$, $C$. Agent 2 is capable of helping the agent 1 to load object $1$ ($h_H$), and to execute simple tasks in the purple regions ($t_1$ -- $t_5$). Agent~3 is capable of taking a snapshot of the rooms $R_1 - R_5$ when being present 
within the respective room ($s_1$ -- $s_5$). 

The robots are assigned complex tasks that require collaboration. Agent 1 would like agent 2 to help loading the heavy object. Then, it should carry the object to tje unloading point and unload it. After that, its task is to periodically load and unload all the light objects. The goal of agent 2 is to periodically execute the sequence of simple tasks $t_1,\ldots,t_5$, in this order. Furthermore, it requests agent 3 to witness the execution $t_5$, by taking a snapshot of room $R_4$ at the moment of the execution. Finally, the goal of agent 3 is to patrol rooms $R_2,R_4,R_5$.
The LTL formulas for the agents are:
$ \phi_1 = \Event (l_H \wedge h_H \wedge \Next \, u_H \wedge \bigwedge_{i \in \{A,B,C\}} \Always \Event \, (l_i \wedge \Next u_i)),$\\
$ \phi_2 = \Always \Event \ (t_1 \wedge \Next \ (t_2 \wedge \Next \ (t_3 \wedge \Next \ (t_4 \wedge \Next \ t_5 \wedge s_4 )))))$, and $\phi_3 = \bigwedge_{i\in \{2,4,5\}} \Always \Event \, s_i$.

We have implemented the proposed solution in MATLAB, and we illustrate the resulting trace prefixes after 40 iterations in~Fig.~\ref{fig:example}.(B). It can be seen that the agents make progress towards satisfaction of their respective formulas. 
In the computation, the default values of planning horizons were $h=3$, and $H=5$. The latter value was sometimes too low to find a solution, thus, in several cases it has been extended as described in~\ref{sec:relax}. The maximum value needed in order to find a solution was $H=9$.
The sizes of the product automata handled in each iteration of the algorithm are depicted in Fig.~\ref{fig:sizes}. {In the centralized solution,} all three agents  belong to the dependency class, and hence, their synchronized product  transition system has $144^3 \approx 3$ million states. In contrast, in our solution, the decomposition into dependency classes is done locally, and at most two agents belong to the same dependency class at the time (in iterations 1-5, and 17-31), resulting into product system sizes in order of thousands states. When the agents are not dependent on each other within $h$ (in iterations 6-16, 32-40), the sizes of product systems are tens to hundreds states.

\begin{figure}[h!]
\begin{center}
\begin{tabular}{c}
{\hspace{-8cm}(A)}  \\ 
\scalebox{0.5}{
\input{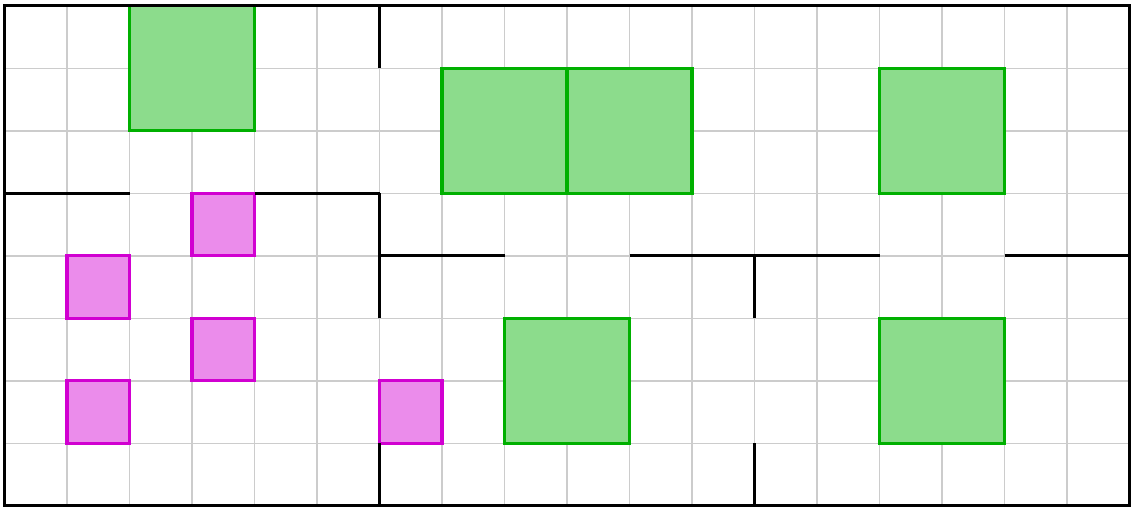_t} } \\
\hspace{-8cm}(B) \\
\scalebox{0.5}{
\input{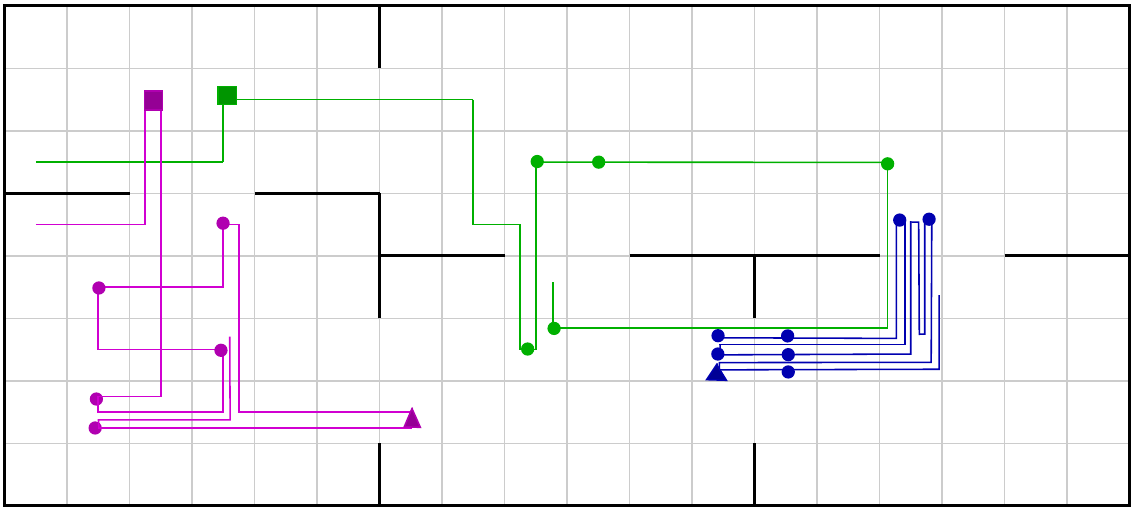_t} }

\end{tabular}
\end{center}
\caption{\footnotesize(A) An example of an environment partitioned into cells. The environment consists of rooms $R_1,\ldots, R_5$. Green regions are loading and unloading points for a heavy object $H$ and light objects $A$, $B$, $C$. Purple regions depict those where simple tasks $t_1,\ldots,t_5$ can be executed. (B) Traces of agent 1 (green), agent 2 (purple), and agent 3 (blue) after 40 iterations of Alg.~\ref{alg:main}. The initial position of the agents are in the bottom left corner of $R_1$, in the top left corner of $R_3$, and in the cell labeled with $s_4$, respectively. Services $l_H$ and $h_H$, and $t_5$ and $s_4$ are provided at the same time, illustrated as squares, and triangles, respectively. The rest of the provided services are depicted as circles.}
\label{fig:example}

\end{figure}

\begin{figure}[!h]
\begin{center}
\includegraphics[width=0.8\linewidth]{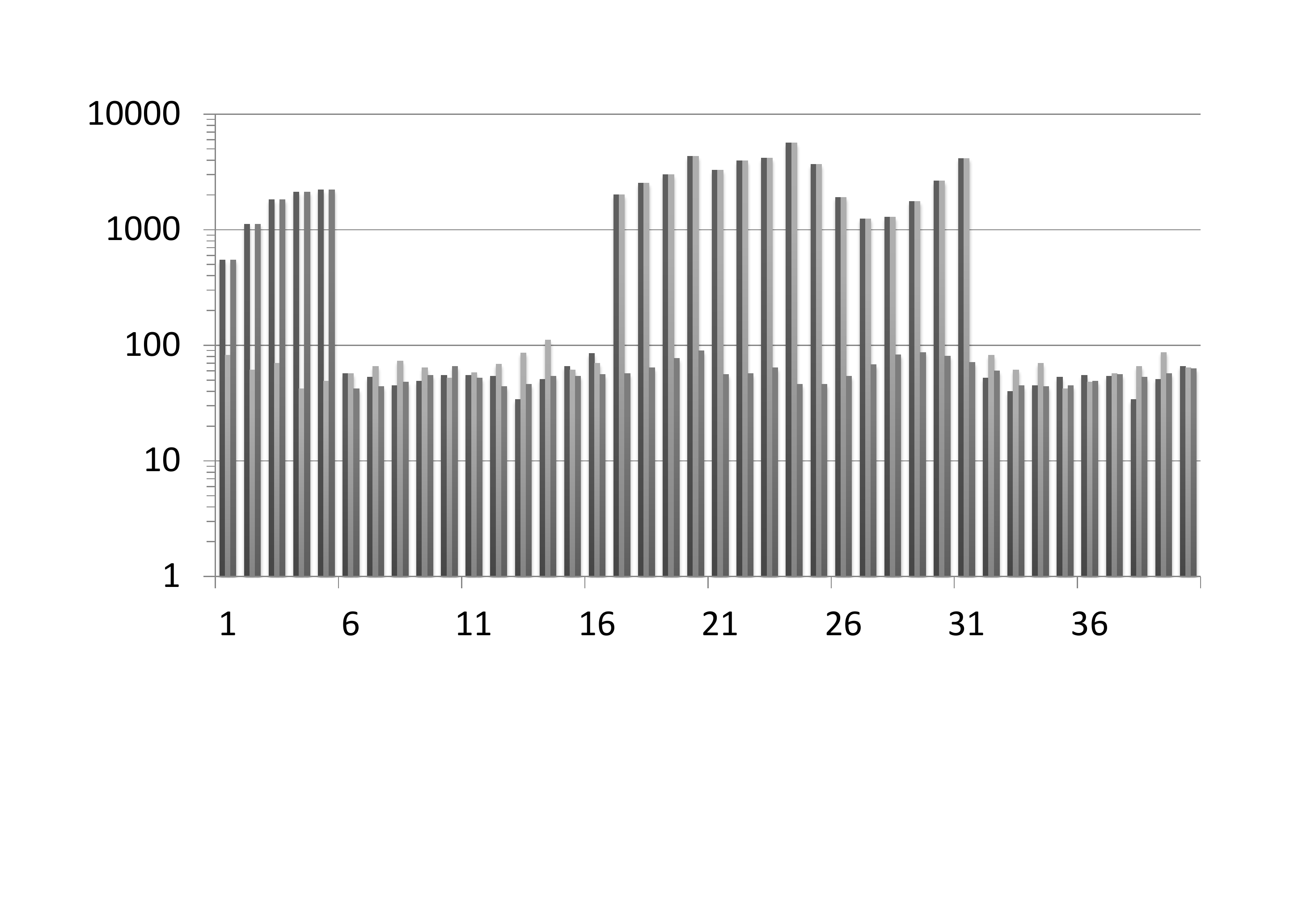}

\caption{\footnotesize The sizes of product automata in time (in logarithmic scale). The horizontal axis is labeled with the algorithm iteration number, the vertical one with the number of states of the product systems.}
\label{fig:sizes}
\end{center}

\end{figure}

\section{Summary and Future Work}
\label{sec:summary}
\new{We have proposed an automata-based receding horizon approach to solve the multi-agent planning problem from local LTL specifications. The solution decomposes the infinite horizon planning problem into a finite horizon planning problems that are solved iteratively. Such solution brings two major advantages over the offline, centralized solution: First, the limited planning horizon enables each agent to restrict its focus only on those agents, that are constrained by its formula within the limited horizon, not within the whole infinite horizon. Thus, we reach a partially decentralized solution. Second, we reduce the size of handled state space.

Future research directions include involving various optimality requirements. 
Another aspect that we would like to address is robustness to small perturbations; an offline planning procedure with deterministic transition systems is not suitable for such problems and the complexity of planning with non-deterministic system is unbearable.
}

\bibliographystyle{plain}
\bibliography{refer}

\end{document}